\pdfoutput=1
\documentclass[11pt]{article}
\usepackage[margin=1in]{geometry}
\usepackage{amsmath,amssymb,amsthm}
\usepackage{graphicx}
\usepackage{booktabs}
\usepackage{hyperref}
\newtheorem{theorem}{Theorem}
\newtheorem{lemma}{Lemma}
\newtheorem{proposition}{Proposition}
\newtheorem{corollary}{Corollary}
\theoremstyle{definition}
\newtheorem{definition}{Definition}

\theoremstyle{remark}
\newtheorem{remark}{Remark}

\title{\textbf{On the Necessity of a Liquid Substrate for Mesh Intelligence}}
\author{Hongwei Xu\\ SYM.BOT\\ \texttt{hongwei@sym.bot}}
\date{June 2026}

\begin{document}
\maketitle

\begin{abstract}
\noindent
A mesh of sovereign agents has no center: no shared clock, no shared model, and no coordinator to gather
data or retrain. Its competence rests on each agent folding the projections its peers emit into a single
internal state, online, from observations that arrive at irregular, unscheduled times, on a substrate whose
weights it cannot retrain. Any one of these constraints is tractable on its own; folding optimally under all
three at once is not. We ask what such a substrate must be, and prove two necessary conditions from one model
of a self-evolving latent observed at irregular, exogenous times. Because the latent changes, its optimal
estimator is time-varying: an adaptive timescale is necessary, and every fixed-gain filter is strictly
suboptimal. And because arrivals are clock-free, the optimal estimate depends on the
elapsed gap between them, which no gap-blind network recovers at any width or depth. This second
condition is \emph{capacity-independent}: scale cannot substitute for the missing dependence. The two
conditions intersect in the continuous-time \emph{liquid} class. An LSTM satisfies the first, a fixed
continuous-time filter the second, and a multi-timescale liquid network both. Synthetic experiments confirm
each: the network attains the timescale, and the separation is computed exactly. The characterization is necessary, not sufficient, and binds fixed-weight substrates: a network
free to retrain reaches the class by other means. Proved per agent, the necessity binds every agent of a
mesh, a structural condition on mesh intelligence.
\end{abstract}

\section{Introduction}
\label{sec:intro}
In a \emph{mesh}, sovereign agents infer and learn with no center. Each agent keeps a private, evolving
state. It \emph{admits} only what it finds relevant in what its peers \emph{emit}, and emits only typed
projections of that state, never the state itself. No agent and no coordinator holds the whole: the
intelligence lies in what each agent admits from its peers and integrates into its own state. We call this
\emph{mesh intelligence}, and it rests entirely on that per-agent integration.

Three constraints make that integration hard, and a mesh imposes all three at once. The observations arrive
at irregular times that no agent can schedule. The quantity each describes keeps moving, so the stream is
\emph{non-stationary} and any fixed summary goes stale. And the agent must operate \emph{online}, on a
substrate whose weights it cannot retrain, because a live mesh has no training loop, no dataset, and no
objective to optimize. Each constraint alone is tractable; their conjunction is what makes optimal folding hard.
Because arrivals are clocked, the elapsed gap between them shapes each
update\footnote{Information-theoretically: under exogenous (state-independent) sampling the elapsed gap is
non-informative about the latent's \emph{value} $s^\star$ ($I(\Delta;s^\star)=0$~\cite{shannon}), yet it
constrains the prior covariance, hence the gain: the covariance channel Thm~\ref{thm:sep} shows is necessary
information no gap-blind estimator can forgo (the gap sharpens the posterior given noisy observations while
saying nothing about where the latent is). $I(\Delta;s^\star)>0$ would require informative, state-modulated
sampling, which we do not assume.}; because the target keeps moving, the substrate must both retain past
structure and follow new change.

Coupling has two levels, and only one is ours. A \emph{content} gate decides \emph{which} of a peer's
projections to admit~\cite{svaf}; that is a separate problem. The \emph{temporal} substrate decides
\emph{how} the admitted observations accumulate into the agent's state, and no prior work settles what it
must be. We study it in the regime a live mesh forces: weights fixed once and never retrained, with only the
state evolving as data arrives, the reservoir or Liquid State Machine view~\cite{lsm,esn}. This is the mesh's
operating condition, not a modeling convenience: an agent cannot know its future peers, the latent it tracks
is never observed, and no coordinator gathers data or ships new weights. The question is then sharp:
\emph{what must a fixed-weight substrate be to integrate a non-stationary, irregularly timed stream
optimally?}

We answer with two necessities, both read from one model of a self-evolving latent observed at irregular
times (\S\ref{sec:necessity}). First, because the latent moves, the optimal estimator is time-varying, so the
substrate needs an \emph{adaptive} timescale; a fixed-gain filter is strictly suboptimal
(Prop.~\ref{prop:necessity}; Lemma~\ref{lem:floor} gives the explicit one-pole case). Second, because the
same latent is seen at clock-free times, the optimal estimate depends on the elapsed gap between arrivals, and
\emph{no} gap-blind network matches it at any width or depth (Thm~\ref{thm:sep}). This second result is
\emph{capacity-independent}: added size cannot close the gap. Together the two properties define the
\emph{liquid} class~\cite{cfc,ltc}: an LSTM has the first, a fixed continuous-time filter the second, and a
multi-timescale liquid network has both (Fig.~\ref{fig:cells}). We confirm the adaptive-timescale necessity
against fixed-timescale and naive-adaptive baselines, and compute the gap-awareness separation exactly
(\S\ref{sec:exp-liquid}).

\begin{figure}[t]
\centering
\includegraphics[width=\linewidth]{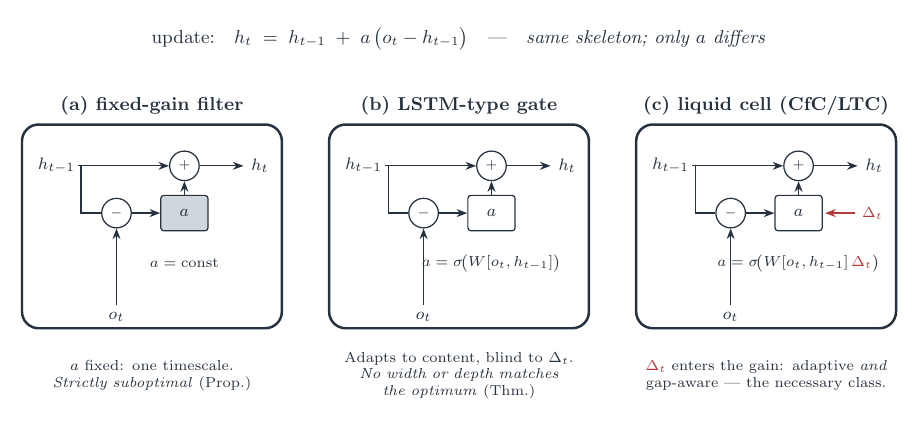}
\caption{\textbf{The two necessities in one cell.} Every substrate forms the same update
$h_t=h_{t-1}+a\,(o_t-h_{t-1})$; only the gain $a$ differs. \textbf{(a)}~A fixed gain is one
timescale: strictly suboptimal (Prop.~\ref{prop:necessity}). \textbf{(b)}~An LSTM-type gate
$a=\sigma(W[o_t,h_{t-1}])$ (a deliberate reduction; the full cell is richer but gap-blind all the same) adapts
to content but is indexed by \emph{step}, not elapsed time: it keeps state
through \emph{steps}, never sees the gap $\Delta_t$, and so matches the optimum at \emph{no} width or depth
(Thm.~\ref{thm:sep}). \textbf{(c)}~A liquid cell (CfC/LTC) admits $\Delta_t$ into the gain,
$a=\sigma(W[o_t,h_{t-1}]\,\Delta_t)$, adaptive \emph{and} gap-aware: the necessary class. Capacity is the
wrong axis: long step-memory does not encode elapsed duration.}
\label{fig:cells}
\end{figure}

Both necessities are proved for a single agent, but a mesh places every agent under the same conditions, so
the result binds each one. The conditions also cannot be evaded within a mesh: no agent can force its peers
onto a regular clock, the one case in which the gap stops mattering, and with no center to correct a
gap-blind agent the shortfall is present at every node. A liquid substrate is therefore required at every
agent. This is a structural condition on mesh intelligence rather than a property of any single filter,
obtained by reading the single-agent theorems across the agents of a mesh.

This necessity is one layer of a larger characterization. A mesh couples its agents on two orthogonal axes.
The companion Mesh Inference~\cite{meshinf} treats the \emph{spatial} axis, with the latent held static:
whether a center-free population can recover the collective answer at all. This paper treats the
\emph{temporal} axis, for a single agent: what its substrate must be to keep its own state current as the
latent evolves. With the content gate that decides which observations a receiver admits~\cite{svaf}, the
three name the layers a mesh agent runs on: \emph{which} to admit, \emph{whether} the collective can recover
the answer, and \emph{how} to track it in time. We characterize the last, and take the other two as the
companions' domain.

The result is \emph{scoped, not universal}. It states what optimal integration requires \emph{among
fixed-weight substrates}. A network free to retrain, such as GRU-D or Time-LSTM, can reach both properties by
learning, and a fully trainable learner is not constrained at all. We also do not claim that a liquid
substrate \emph{attains} the optimum, only that nothing outside the liquid class can; whether it does is the
empirical question of \S\ref{sec:exp-liquid}.

The ingredients are classical: time-varying-gain filtering and change
detection~\cite{kalman,haykin,basseville,bocpd}, continuous-time and liquid networks~\cite{cfc,ltc,neuralode},
and reservoir computing~\cite{lsm,esn}. What is new is the pairing: two necessities drawn from one model, the
second a capacity-independent separation, whose intersection \emph{defines} the liquid class for fixed-weight
online integration, with the scope drawn honestly.

\section{Related work}
\label{sec:related}

\paragraph{Irregularly-sampled time series.}
The standard route to irregular sampling feeds the elapsed gap to a discrete recurrence: GRU-D decays the
hidden state by the elapsed gap~\cite{grud}, Time-LSTM gates on it~\cite{timelstm}, and ODE-RNN /
latent-ODE models evolve the state by a learned ODE between observations~\cite{odernn}. These \emph{attain}
gap-awareness, and by Thm~\ref{thm:sep} they must to be optimal, but they do so by \emph{training} weights to
the gap. Our separation concerns what a \emph{fixed-weight} substrate can do without that training, where a
continuous-time cell is the native realization (Cor.~\ref{cor:native}).

\paragraph{Fixed-weight vs.\ trained substrates.}
The fixed-weight premise also separates our setting from distributed learning, which jointly optimizes a
shared model: federated averaging exchanges weights~\cite{fedavg}, decentralized SGD averages models with
neighbors~\cite{dsgd}, and multi-agent RL centralizes training~\cite{maddpg}. The substrate here is frozen
during operation (\S\ref{sec:model}): no weights are optimized, online or off, so there is nothing to share.

\paragraph{Continuous-time, liquid, and adaptive-timescale networks.}
The substrate is a closed-form continuous-time network~\cite{cfc}, the closed-form approximation to
the liquid time-constant ODE~\cite{ltc} in the neural-ODE family~\cite{neuralode}. Unlike a fixed-rate
recurrence, its time constants are input-dependent and multi-timescale, and that is what
\S\ref{sec:necessity} turns on: integrating a self-evolving latent \emph{requires} an adaptive timescale,
the estimation-theoretic reason a time-varying-gain filter dominates a fixed-gain one~\cite{kalman,haykin}
and change must be detected to be followed~\cite{basseville}. We do not claim the specific closed-form
network is the \emph{unique} such substrate; any adaptive continuous-time network satisfies the necessity.
What we add is to make the liquid (input-dependent, multi-timescale) property load-bearing, where prior
treatments leave it incidental. From reservoir computing, the Liquid State Machine~\cite{lsm} and echo-state
network~\cite{esn} lineage, we borrow only the fixed-substrate idea (a fixed continuous-time recurrence
carrying computation in its dynamics), not its offline-trained readout; we keep $\rho$ fixed too, so the only
adaptation is of the online state, not the weights (Remark~\ref{rem:fixed}).

\paragraph{Deep state-space models.}
Structured state-space models, HiPPO~\cite{hippo}, S4~\cite{s4}, and the selective-scan Mamba~\cite{mamba},
are recurrences derived from a continuous-time model and discretized with a step $\Delta$, and that step is
where our characterization bites. The same $\Delta$ can play two distinct roles: it yields an \emph{adaptive
timescale} (L1) when it is input-dependent, but \emph{elapsed-gap dependence} (L2) only when it \emph{is} the
gap. Mamba's selection makes $\Delta$ a function of \emph{content}, not of elapsed time
($\Delta=\mathrm{softplus}(\mathrm{Linear}(x))$); its scalar reduction (\cite{mamba}, Thm.~1) is the
content-gated recurrence $h_t=(1{-}g_t)h_{t-1}+g_t x_t$ with $g_t=\sigma(\mathrm{Linear}(x_t))$, precisely
the adaptive-but-gap-blind cell of Fig.~\ref{fig:cells}(b). On an irregularly-timed stream it is therefore
gap-blind, and Thm~\ref{thm:sep} binds it: even trained, no width or depth reaches the optimum until $\Delta$
is fed the elapsed gap. A frozen SSM \emph{joins} the liquid class precisely by carrying the gap in its
discretization, the continuous-time route of Cor.~\ref{cor:native}, a different choice of $\Delta$ than
content selection. Their weights are trained where our necessity is fixed-weight; the result then tells an
SSM designer the one structural property a frozen, deployed model must keep under irregular sampling:
$\Delta$ must carry the gap, not only the content. The same holds for a frozen Transformer: an LLM run on an
irregularly-timed stream is step-indexed, hence gap-blind, so the separation binds it at any scale; it needs
its estimate to depend on the gap, supplied structurally by a liquid substrate or extrinsically by feeding
the gap as input (Cor.~\ref{cor:native}).

\section{Setup: the model and the substrate}
\label{sec:model}
This section formalizes the setting and the substrate: the mesh (Def.~\ref{def:mesh}), the single-agent
integration problem each of its agents faces by construction (Def.~\ref{def:model}), and the fixed-weight
continuous-time substrate that integrates the resulting asynchronous stream into one internal state. The
substrate is a fixed-weight mechanism: what moves is the state, never the weights. Section~\ref{sec:necessity}
then shows what that substrate \emph{must} be.

\begin{definition}[Mesh; mesh intelligence]
\label{def:mesh}
A \emph{mesh} is a set of sovereign agents with no coordinator and no shared model. Each holds a private,
evolving cognitive state, \emph{emits} typed projections of it on its own clock (never the state itself),
and \emph{admits}, through a content gate, the projections it finds relevant from others, integrating
them into one internal state. \emph{Mesh intelligence} is the inference and learning the collective performs
from these admissions alone, with no center. Three properties then hold for \emph{each} agent by
construction: its admitted stream arrives at times the emitters set and it cannot schedule
(\emph{exogenous}); the cognitive states those projections describe keep evolving (\emph{non-stationary}); and
its integration substrate is fixed during operation, since no coordinator gathers data, sets an objective, or
retrains it.
\end{definition}

An agent thus estimates an evolving latent from a stream of observations arriving at irregular times: in a
mesh, the admitted projections of its peers. Our results concern the integration of whatever stream results;
let $\mathcal S(t)$ denote it up to time $t$. The three properties of Def.~\ref{def:mesh} are exactly the
hypotheses of the model below, so the necessities of \S\ref{sec:necessity} hold for \emph{each agent} of a
mesh by construction. The theorems are single-agent; the mesh is the setting that forces their conditions
rather than assuming them. That is why a liquid substrate is necessary \emph{for mesh intelligence}, and
equally why the result stands on its own.

\begin{definition}[Latent and observation model]
\label{def:model}
A single \emph{family} of models underlies the necessities of \S\ref{sec:necessity}. The latent $s(t)$ is
\emph{self-evolving}: the uncertainty accumulated over an elapsed interval of length $\Delta$,
$G(\Delta):=\operatorname{Var}\!\big(s(t{+}\Delta)-s(t)\big)$, is \emph{strictly increasing} in $\Delta$ with
$G(0)=0$. Two realizations recur, and we name whichever makes a mechanism most transparent: a
\emph{change-point} process (piecewise-stationary, change rate $\lambda>0$: stable for stretches, with
changes at unknown times; over a gap $\Delta$ a change has occurred with probability $1-e^{-\lambda\Delta}$,
so $G$ rises from $0$ toward the inter-level variance), and a \emph{Brownian drift} with $G(\Delta)=q\Delta$.
Observations arrive at irregular times $t_1<t_2<\cdots$ drawn \emph{exogenously} (independently of the latent
path and the noise), with gaps $\Delta_k=t_k-t_{k-1}$, as $o_k=s(t_k)+\varepsilon_k$, $\varepsilon_k$
zero-mean, variance $\sigma^2$. The two legs lean on complementary facets: Prop.~\ref{prop:necessity} uses
the \emph{change-point} realization specifically (regime alternation requires $\lambda>0$, making the
optimal gain time-varying); Thm~\ref{thm:sep} uses only $G$ strictly increasing under irregular sampling
(the elapsed gap \emph{sets} the accumulated uncertainty, making the optimal gain gap-dependent).
\end{definition}

\paragraph{The continuous-time substrate.}
The agent carries an internal \emph{state} $h\in\mathbb{R}^n$, its evolving estimate, and updates it from
the stream $\mathcal S$ through a closed-form continuous-time network~\cite{cfc}. With the arrival times and
gaps $\Delta_k$ of Def.~\ref{def:model}, and $x_k=\mathrm{enc}(o_k)$ the encoding of the $k$-th
observation, the state advances by
\begin{equation}
\label{eq:cfc}
h(t_k)=\;f_1(u_k)\odot\big(\mathbf 1-\sigma_k\big)\;+\;f_2(u_k)\odot\sigma_k,
\qquad
\sigma_k=\textstyle\mathrm{sig}\!\big(a(u_k)\,\Delta_k+b(u_k)\big),
\quad u_k=\phi\big([\,x_k;\,h(t_{k-1})\,]\big),
\end{equation}
where $\phi$ is a fixed backbone, $f_1,f_2$ are fixed maps, $a,b$ are fixed elapsed-time gates, and
$\sigma_k\in(0,1)^n$ interpolates per coordinate between the ``fast'' response $f_1$ and the ``slow''
response $f_2$ as a function of the real elapsed time $\Delta_k$. Equation~\eqref{eq:cfc} is the
closed-form CfC update~\cite{cfc}: the only time input is the elapsed gap, which is what makes the
summary continuous-time. A fixed read-out $r=\rho(h)$ maps the state to the agent's output.

\begin{remark}[Fixed substrate, moving state: the reservoir view]
\label{rem:fixed}
The parameters $(\phi,f_1,f_2,a,b)$ are \emph{fixed} at inference: what evolves is the \emph{state} $h$, not
the substrate. \emph{Fixed} means \emph{frozen during operation}, not \emph{arbitrary}: the weights are set
beforehand, by offline training or a spectral-radius-tuned initialization meeting the echo-state property
below, and that origin is outside this paper's scope. Offline-fixing sets the weights, not what the agent
knows: $h$ is acquired entirely in operation, so a trained-then-frozen substrate carries no pre-trained
knowledge, and the ``no ahead-of-time training'' claim is about the knowledge, not the substrate. The results
use only that the weights are frozen and form a stable summary basis, with no online update. From reservoir
computing and the Liquid State Machine~\cite{lsm,esn} we borrow one idea, that a fixed continuous-time
recurrence can carry computation in its dynamics, not their offline-trained readout ($\rho$ is fixed). Two
scopings keep this honest. First, the substrate is not a \emph{generic} reservoir: \S\ref{sec:necessity}
shows its dynamics must realise an adaptive timescale, so ``any rich recurrence will do'' does not apply.
Second, the necessity asks little of the particular weights, being a structural statement about the
substrate's \emph{form}, an adaptive continuous-time gate, that holds for any fixed weights realising it;
what \emph{does} need a weight condition is that the substrate be a useful summary at all, a stable fading
memory of recent input (the echo-state property~\cite{esn}), which we assume and do not claim for arbitrary,
possibly chaotic, weights. Adapting those weights online is the deeper frontier (\S\ref{sec:open}); the
estimation that runs \emph{on} the fixed substrate is online regardless.
\end{remark}

\section{Two necessities: an adaptive timescale and an elapsed-gap update}
\label{sec:necessity}
Section~\ref{sec:model} presented the substrate as a fixed-weight continuous-time recurrence; here we show
what it \emph{must} be, at the generality the claim needs. Optimal integration of a self-evolving latent,
observed asynchronously, requires two properties at once: an \emph{adaptive} (input-dependent) timescale and
a \emph{continuous-time} (elapsed-gap) update. Neither alone suffices: an LSTM forget gate is an adaptive
timescale that is not continuous-time, and a fixed continuous-time filter is timescale-rigid. We prove each
necessary here (Prop.~\ref{prop:necessity}, Thm~\ref{thm:sep}); the class realizing both, the \emph{liquid}
network, is defined once they are in hand (\S\ref{sec:liquid}). These are necessity claims, not sufficiency or
uniqueness, scoped to fixed-weight substrates (attainment is examined empirically in
\S\ref{sec:exp-liquid}); the full scope is in \S\ref{sec:open}.

\paragraph{The requirement.}
The agent integrates its observation stream into a state $h$ that estimates an evolving latent
$s(t)$: its current best estimate. Two demands sit on $h$ at once. \emph{Retention}: while $s$ is
stable, $h$ must reject observation noise and hold the accumulated value. \emph{Responsiveness}: when $s$
genuinely changes, the latent shifting to a new value, $h$ must move to it without long lag. An estimator of
a self-evolving latent lives in both regimes, often alternating; a substrate that fails either does not
follow it. We exhibit the tension concretely on the exponential filter, the form the CfC's gate takes, then
state the two necessities, which do not depend on that form and both rest on the latent and observation model
of Def.~\ref{def:model}.

\begin{lemma}[Noise--lag floor of a fixed gain]
\label{lem:floor}
Let the integrator be the fixed-gain update $h_k=(1-a)h_{k-1}+a\,o_k$, $a\in(0,1)$, with observations
$o_k=s_k+\varepsilon_k$ where $\varepsilon_k$ are zero-mean, variance $\sigma^2$, uncorrelated. Define its
\emph{noise} as the stationary error variance when $s$ is constant,
$V(a)=\lim_{k\to\infty}\operatorname{Var}(h_k-s)$, and its \emph{lag} as the integrated squared error of
the noiseless unit-step response (the state starting at the old value when $s$ jumps),
$L(a)=\sum_{k\ge0}\big(h_k-s_{\mathrm{new}}\big)^2$ for a unit jump. Then
\[
V(a)=\frac{a}{2-a}\,\sigma^2,\qquad
L(a)=\frac{1}{a(2-a)},\qquad
V(a)\,L(a)=\frac{\sigma^2}{(2-a)^2}\in\Big(\tfrac{\sigma^2}{4},\,\sigma^2\Big).
\]
Hence $V(a)\,L(a)>\sigma^2/4$ for every $a\in(0,1)$, with $\inf_a V(a)L(a)=\sigma^2/4$: the noise--lag
product is bounded below by a positive constant no fixed gain beats, so $V(a)\le\epsilon$ forces
$L(a)\ge\sigma^2/(4\epsilon)$.
\end{lemma}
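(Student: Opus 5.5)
The plan is to compute $V(a)$ and $L(a)$ separately from the linear recursion, each by unrolling it into a geometric series, and then multiply and bound the product.

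\emph{Noise.} With $s$ constant, set the error $e_k=h_k-s$. It obeys $e_k=(1-a)e_{k-1}+a\varepsilon_k$. Unrolling from any initial condition gives
\[
e_k=(1-a)^k e_0+a\sum_{j=0}^{k-1}(1-a)^j\varepsilon_{k-j}.
\]
The first term vanishes as $k\to\infty$ because $|1-a|<1$. If $h_0$ is random, one treats it as independent of the noise, or reads $V$ as the conditional variance; either way its contribution decays.

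Since the $\varepsilon_k$ are uncorrelated with common variance $\sigma^2$, the variance is a geometric sum:
\[
a^2\sigma^2\sum_{j\ge0}(1-a)^{2j}=\frac{a^2\sigma^2}{1-(1-a)^2}=\frac{a^2\sigma^2}{a(2-a)}=\frac{a}{2-a}\,\sigma^2 .
\]
Equivalently, one can solve the stationary fixed point $V=(1-a)^2V+a^2\sigma^2$.

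\emph{Lag.} In the noiseless unit-step case, take $h_0=s_{\mathrm{old}}$ with $s_{\mathrm{new}}-s_{\mathrm{old}}=1$. Then $d_k=h_k-s_{\mathrm{new}}$ satisfies $d_k=(1-a)d_{k-1}$ with $d_0=-1$, so $d_k=-(1-a)^k$. Hence
\[
L(a)=\sum_{k\ge0}(1-a)^{2k}=\frac{1}{1-(1-a)^2}=\frac{1}{a(2-a)} .
\]
The only care point is the indexing convention: the sum starts at $k=0$, where the state still sits at the old value, which matches "the state starting at the old value."

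\emph{Product and bound.} Multiplying the two expressions, the factor $a$ cancels:
\[
V(a)\,L(a)=\frac{a\sigma^2}{2-a}\cdot\frac{1}{a(2-a)}=\frac{\sigma^2}{(2-a)^2}.
\]
The map $a\mapsto(2-a)^{-2}$ is strictly increasing on $(0,1)$, with limits $1/4$ as $a\to0^+$ and $1$ as $a\to1^-$. So the product lies in the open interval $(\sigma^2/4,\sigma^2)$, and its infimum over $a\in(0,1)$ is $\sigma^2/4$ but is not attained. The strict inequality $V(a)L(a)>\sigma^2/4$ follows because $2-a<2$.

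\emph{Final implication.} If $V(a)\le\epsilon$, then
\[
L(a)=\frac{V(a)L(a)}{V(a)}>\frac{\sigma^2/4}{\epsilon},
\]
which gives $L(a)\ge\sigma^2/(4\epsilon)$.

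There is no real obstacle in this argument. The only steps needing care are justifying that the transient term vanishes in the definition of $V$, which uses the uncorrelated-noise and $|1-a|<1$ hypotheses, and keeping the lag index convention consistent with the stated closed form.
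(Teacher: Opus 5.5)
Your proof is correct and follows the paper's own argument: the AR(1) stationary variance for $V(a)$, the geometric-series step response for $L(a)$, and the monotonicity of $(2-a)^{-2}$ on $(0,1)$ for the product bound. Unrolling the recursion, rather than solving the fixed point $V=(1-a)^2V+a^2\sigma^2$ directly as the paper does, also shows that the limit exists. Like the statement itself, the strict bound and your division by $V(a)$ tacitly assume $\sigma^2>0$.
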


\begin{proof}
\emph{Noise.} With $s$ constant the error $e_k=h_k-s$ obeys $e_k=(1-a)e_{k-1}+a\varepsilon_k$, a stable
AR(1) ($|1-a|<1$) driven by $a\varepsilon_k$; taking variances,
$\operatorname{Var}(e)=(1-a)^2\operatorname{Var}(e)+a^2\sigma^2$, so
$\operatorname{Var}(e)=\dfrac{a^2\sigma^2}{1-(1-a)^2}=\dfrac{a^2\sigma^2}{a(2-a)}=\dfrac{a}{2-a}\sigma^2$.
\emph{Lag.} For a unit step with the state at the old value, the noiseless response is $h_k=1-(1-a)^k$,
so the error is $h_k-1=-(1-a)^k$ and
$L(a)=\sum_{k\ge0}(1-a)^{2k}=\dfrac{1}{1-(1-a)^2}=\dfrac{1}{a(2-a)}$.
\emph{Product.} $V(a)L(a)=\dfrac{a}{2-a}\sigma^2\cdot\dfrac{1}{a(2-a)}=\dfrac{\sigma^2}{(2-a)^2}$. For
$a\in(0,1)$, $(2-a)^2\in(1,4)$, giving $V(a)L(a)\in(\sigma^2/4,\sigma^2)$ with infimum $\sigma^2/4$ as
$a\to0^+$. The stated implication is immediate.
\end{proof}

\begin{remark}[The product floor is exponential-filter-specific; the tension is general]
\label{rem:deadbeat}
The constant $\sigma^2/4$ is a feature of the one-pole form, not a universal law: a one-tap ``deadbeat''
filter $g_0{=}1$ (output the latest observation) follows a step with \emph{zero} lag but full noise
$V{=}\sigma^2$, so $V\!\cdot\!L{=}0$. What is general is the one-sided tension (low noise forces lag, since
an unbiased noise-suppressing impulse response with $\sum_k g_k{=}1$ and small $\sum_k g_k^2$ must spread
over $\gtrsim 1/\!\sum_k g_k^2$ taps, and a spread response settles slowly) and the optimal-filter statement
of Prop.~\ref{prop:necessity}, which holds for \emph{any} fixed filter, not just the one-pole.
\end{remark}

\begin{proposition}[Adaptivity is necessary]
\label{prop:necessity}
In the model of Def.~\ref{def:model} with change rate $\lambda>0$ (the latent stable for stretches, with
changes at unknown times), the minimum-mean-square-error (MMSE) estimate of $s(t)$ given the observations has
a \emph{time-varying} effective gain that no \emph{fixed}-gain (LTI) filter attains; every constant-gain
filter is therefore strictly suboptimal.
\end{proposition}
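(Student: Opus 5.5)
The plan is to write the MMSE estimator as a conditional expectation and show that its effective gain depends on the data. In a way, the change-point realization of Def.~\ref{def:model} is the easiest case for this. Concretely, I take the change-point process in discrete form: at each step (or over each gap $\Delta_k$) the level is redrawn with probability $p=1-e^{-\lambda\Delta_k}\in(0,1)$ from a prior with mean $\mu$ and variance $\tau^2>0$, and is otherwise unchanged. For transparency I take Gaussian noise and a Gaussian level prior; a non-Gaussian remark can follow. The MMSE estimate is then $\hat s_k=\mathbb E[s(t_k)\mid o_{1:k}]$. Writing $\hat s_k=\hat s_{k-1}+a_k\,(o_k-\hat s_{k-1})$ defines the effective gain $a_k$ (on the event $o_k\neq\hat s_{k-1}$). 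This is the same update form as Fig.~\ref{fig:cells} and Lemma~\ref{lem:floor}.

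\textbf{Step 1 (mixture form).} Condition on the run length $r_k$, the time since the last change, in the style of Adams--MacKay~\cite{bocpd}. Given $r_k=r$, the posterior on the level is Gaussian with mean $m_{k,r}$, a precision-weighted average of the last $r$ observations and $\mu$. Its per-run gain is $1/(r+\sigma^2/\tau^2)$. The MMSE estimate is then $\hat s_k=\sum_r \pi_k(r)\,m_{k,r}$, where the run-length posterior $\pi_k$ is updated by Bayes' rule. The weights $\pi_k$ depend on the likelihood of $o_k$ under each hypothesis, which is a function of the innovation $o_k-\hat s_{k-1}$.

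\textbf{Step 2 (two histories, two gains).} I will exhibit two observation histories of positive probability density whose effective gains differ. In history A the observations are constant and equal to $\mu$ for many steps, and then $o_k=\mu+d$ with a small innovation $d$. In history B the innovation $d$ is huge. In A the run-length posterior concentrates on long runs, so $a_k\approx p+(1-p)/(r+\sigma^2/\tau^2)$, which is small. In B the likelihood ratio makes the posterior probability of "changed at $k$" tend to $1$ as $|d|\to\infty$, so $a_k\to 1/(1+\sigma^2/\tau^2)$. For $r$ large these two limits differ. The cleanest version compares gains at the same time index but under different innovations. This shows $a_k$ is a nonconstant function of the data, so the optimal estimator is nonlinear in the observations.

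\textbf{Step 3 (no LTI filter attains it).} Any fixed-gain filter, and more generally any LTI filter $h_k=\sum_j g_j o_{k-j}$, is linear in the observations. The MMSE estimate is the unique minimizer of mean-square error over all measurable estimators, up to null sets. So an LTI filter is optimal only if it equals $\hat s_k$ almost surely. By Step 2, $\hat s_k$ is not an affine function of $o_{1:k}$: its derivative in $o_k$ differs between the two regions, which are open sets of positive density. A strict MSE gap follows: $\mathbb E(h_k-s)^2-\mathbb E(\hat s_k-s)^2=\mathbb E(h_k-\hat s_k)^2>0$. This holds at each $k$ and hence in the stationary average.

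\textbf{Main obstacle.} The hard part is making Step 2 rigorous without fully closed-form posteriors. I must show the derivative $\partial\hat s_k/\partial o_k$ genuinely varies, and not just heuristically. My first attempt is a limit argument, $|d|\to\infty$ versus $d\to0$, using the Gaussian likelihood ratio between the "no change" and "change" hypotheses. This ratio behaves like $\exp(-d^2/2\sigma_{\text{stay}}^2+d^2/2\sigma_{\text{new}}^2)$ with $\sigma_{\text{new}}>\sigma_{\text{stay}}$, so it tends to $0$ as $|d|\to\infty$. That gives the two distinct limiting slopes, and continuity then yields open sets where the gains differ. The requirement $\lambda>0$ enters exactly here: it is what makes $p>0$, and hence the change hypothesis, available at all.
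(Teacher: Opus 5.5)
Your proposal follows the paper's own route: the run-length (BOCPD) mixture, a gain that differs between the just-changed and long-stable regimes, hence a nonlinear conditional mean that no linear (LTI) filter can equal, with strictness from uniqueness of the MMSE estimator. Your two-limit likelihood-ratio argument, the continuity/full-support step, and the identity $\mathbb E(h_k-s)^2-\mathbb E(\hat s_k-s)^2=\mathbb E(h_k-\hat s_k)^2$ make rigorous what the paper only sketches.

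One slip needs fixing in history A. There the change branch contributes gain $1/(1+\sigma^2/\tau^2)$, not $1$, and it carries posterior rather than prior weight. The secant slope is then exactly the run-length-posterior average of your per-run gains $1/(r+\sigma^2/\tau^2)$. As $d\to0$ this average puts positive weight on runs $r\ge2$, so it lies strictly below the $|d|\to\infty$ limit $1/(1+\sigma^2/\tau^2)$ for every $p\in(0,1)$. This holds without needing $r$ large. Your formula $p+(1-p)/(r+\sigma^2/\tau^2)$, by contrast, would fail to separate the two limits for some values of $p$.
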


\begin{proof}[Proof]
The Bayesian filter for a change-point process maintains a posterior over the run length since the last
change~\cite{bocpd}; the optimal weight on a new observation is the posterior-mean Kalman gain, a function
of that run-length distribution. Just after a probable change the run-length posterior concentrates near
zero (high uncertainty), forcing a high gain; during a long stable run it concentrates at large run
length (low uncertainty), forcing a low gain~\cite{barshalom}. For $\lambda>0$ both regimes occur with
positive probability, so the optimal gain sequence is non-constant. The exact MMSE estimator is this
BOCPD posterior mean, a \emph{nonlinear} function of the observations (it reweights by the run-length
posterior), whereas every constant-gain choice is a linear time-invariant filter. For $\lambda>0$ the
run-length posterior is non-degenerate, so the conditional mean is genuinely nonlinear and no linear, hence
no constant-gain, filter can equal it (the conditional mean is the \emph{unique} MMSE estimator, so any
filter differing from it on a positive-probability set has strictly greater error); every constant choice
is therefore strictly suboptimal. (At $\lambda=0$, the degenerate stationary limit, the run length grows
without bound and the optimal gain settles to a constant, the steady-state Kalman/exponential filter, so
the hypothesis $\lambda>0$ is necessary; this is exactly the degenerate stationary limit of \S\ref{sec:exp-liquid}.)
The exponential-filter floor (Lemma~\ref{lem:floor}) is one explicit face of the resulting tension. An
input-dependent (adaptive) timescale is therefore necessary.
\end{proof}

The second leg strengthens from a suboptimality claim about one filter into a \emph{separation} between two
architecture classes, one that no amount of capacity can cross. Its mechanism is elementary, the law of
total variance applied to the gap variable, and its value is the architecture separation it yields, not its
difficulty. We first name the class precisely.

\begin{definition}[Gap-blind estimator]
\label{def:gapblind}
An estimator is \emph{gap-blind} if its estimate $\hat s_k$ is a measurable function of the observation
\emph{values} $O_k=(o_1,\dots,o_k)$ and the step index $k$ alone, not of the arrival times $t_{1:k}$ or the
elapsed gaps $\Delta_{1:k}$. This is exactly the per-observation unrolling of \emph{any} recurrence
$h_k=F_\theta(h_{k-1},o_k)$, $\hat s_k=g_\theta(h_k)$, with no time input: any width, depth, nonlinearity,
or trained weights $\theta$, so long as the gap is not supplied. A vanilla per-step RNN, LSTM, or GRU is
gap-blind; a continuous-time cell, or a discrete cell \emph{fed} $\Delta_k$ (Time-LSTM, GRU-D), is not.
Such a recurrence keeps state \emph{through steps}, not through elapsed time: it advances one update per
arrival and cannot tell a long gap from a short one.
\end{definition}

\begin{theorem}[Gap-awareness separation under irregular sampling]
\label{thm:sep}
Adopt the model of Def.~\ref{def:model} with $G$ strictly increasing (e.g.\ the random walk
$G(\Delta)=q\Delta$, or the change-point process of Prop.~\ref{prop:necessity}), sampled with
$\operatorname{Var}(\Delta)>0$ and noise $\sigma^2>0$. Write $s_k=s(t_k)$. Then
\emph{every} gap-blind estimator (Def.~\ref{def:gapblind}) obeys
\[
\mathbb E\big[(\hat s_k-s_k)^2\big]\;\ge\;\mathbb E\big[\operatorname{Var}(s_k\mid O_k)\big]
\;=\;\underbrace{\mathbb E\big[\operatorname{Var}(s_k\mid O_k,\Delta_{1:k})\big]}_{\text{gap-aware MMSE}}
\;+\;\mathcal I_k,
\]
where the nonnegative gap term is the variance reduction from observing the gaps,
\[
\mathcal I_k:=\mathbb E\big[\operatorname{Var}(s_k\mid O_k)\big]-\mathbb E\big[\operatorname{Var}(s_k\mid O_k,\Delta_{1:k})\big]\;\ge\;0 .
\]
The lower bound holds for every gap-blind estimator irrespective of its capacity or training. Moreover
$\mathcal I_k>0$ whenever the optimal estimate's gain depends on the gap (in particular for both
realizations of Def.~\ref{def:model} when $\operatorname{Var}(\Delta)>0$ and $\sigma^2>0$), while
$\mathcal I_k=0$ when sampling is regular ($\operatorname{Var}(\Delta)=0$). Hence under irregular sampling no
gap-blind architecture, at any size, attains the MMSE: matching it \emph{requires} the estimate to depend on the elapsed gap, and
network capacity does not substitute for that dependence.
\end{theorem}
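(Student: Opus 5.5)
The argument splits into three pieces: a lower bound that holds for every gap-blind estimator, a variance decomposition that produces $\mathcal I_k$, and separate arguments that $\mathcal I_k>0$ under irregular sampling and $\mathcal I_k=0$ under regular sampling.

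\emph{Lower bound.} By Def.~\ref{def:gapblind}, any gap-blind $\hat s_k$ is a measurable function of $O_k$; the index $k$ is deterministic and adds nothing. The conditional mean $\mathbb E[s_k\mid O_k]$ is the MMSE estimator among all $\sigma(O_k)$-measurable estimators. Hence
\[
\mathbb E[(\hat s_k-s_k)^2]\ge \mathbb E[\operatorname{Var}(s_k\mid O_k)],
\]
which follows from the orthogonality decomposition $\mathbb E[(\hat s_k-s_k)^2]=\mathbb E[\operatorname{Var}(s_k\mid O_k)]+\mathbb E[(\hat s_k-\mathbb E[s_k\mid O_k])^2]$. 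This step uses nothing about $F_\theta$ or $g_\theta$, so the bound is independent of capacity and training.

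\emph{Decomposition.} The displayed equality is the definition of $\mathcal I_k$. Nonnegativity comes from the tower property. Because $\sigma(O_k)\subseteq\sigma(O_k,\Delta_{1:k})$, the law of total variance applied conditionally gives
\[
\operatorname{Var}(s_k\mid O_k)=\mathbb E[\operatorname{Var}(s_k\mid O_k,\Delta_{1:k})\mid O_k]+\operatorname{Var}(m_k\mid O_k),
\qquad m_k:=\mathbb E[s_k\mid O_k,\Delta_{1:k}].
\]
Taking expectations identifies $\mathcal I_k=\mathbb E[\operatorname{Var}(m_k\mid O_k)]\ge 0$.

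\emph{Strict positivity.} From this identity, $\mathcal I_k>0$ holds exactly when $m_k$ fails to be $O_k$-measurable, i.e.\ when the gap-aware posterior mean genuinely varies with $\Delta_{1:k}$ given the observation values. I would make this concrete in the simplest case, $k=2$ with a Gaussian random walk and a Gaussian prior on $s(t_1)$. Then
\[
m_2=o_2-\kappa(\Delta_2)\,(o_2-\hat s_1),\qquad \kappa(\Delta)=\frac{\sigma^2}{P_1+G(\Delta)+\sigma^2},
\]
where $\hat s_1$ and $P_1$ are the posterior mean and variance after the first observation. Since $G$ is strictly increasing and $\sigma^2>0$, the map $\kappa$ is strictly monotone in $\Delta$. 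Because $\Delta_2$ is exogenous and non-degenerate, and the innovation $o_2-\hat s_1$ is nonzero with positive probability, $m_2$ differs across gap values on a positive-probability set. The main technical point is conditioning on $O_k$ instead of on an independent quantity. Conditional on $O_k$, the gaps need not be independent, since the values carry some information about $\Delta$. They do not determine $\Delta$, however, so $\operatorname{Var}(m_k\mid O_k)>0$ on a set of positive probability. I would show this by exhibiting, for $(o_1,o_2)$ in an open set, two gap values in the support with positive conditional density whose values of $m_2$ differ. For general $k$ the same argument applies to the last innovation, with $P_{k-1}$ replacing $P_1$.

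For the change-point realization I would use the fact that the prior change probability $1-e^{-\lambda\Delta}$ is strictly increasing in $\Delta$. This makes the run-length posterior, and hence the posterior-mean weight on $o_k$ from Prop.~\ref{prop:necessity}, strictly depend on $\Delta_k$, and the same non-measurability argument then goes through. This is where I expect the most fiddly measure-theoretic care.

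\emph{Regular sampling.} If $\operatorname{Var}(\Delta)=0$, then $\Delta_{1:k}$ is almost surely constant. In that case $\sigma(O_k,\Delta_{1:k})=\sigma(O_k)$ up to null sets, so $m_k=\mathbb E[s_k\mid O_k]$ and $\mathcal I_k=0$.

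Combining the three steps: every gap-blind architecture, at any size, has error at least the gap-aware MMSE plus $\mathcal I_k$, and $\mathcal I_k>0$ under irregular sampling. Attaining the MMSE therefore requires an estimate that depends on the elapsed gap.
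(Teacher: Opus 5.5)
Your proposal is correct and follows the paper's proof essentially step for step. The steps match:
\begin{itemize}
\item the lower bound from the optimality of $\mathbb E[s_k\mid O_k]$ among $\sigma(O_k)$-measurable estimators;
\item the law of total variance giving $\mathcal I_k=\mathbb E\big[\operatorname{Var}\big(\mathbb E[s_k\mid O_k,\Delta_{1:k}]\mid O_k\big)\big]$;
\item strict positivity from the gap-monotone Kalman gain, and from the change probability $1-e^{-\lambda\Delta}$ in the change-point case, which both you and the paper treat only at a heuristic level;
\item $\mathcal I_k=0$ when the gaps are almost surely constant.
\end{itemize}
You also note that the everywhere-positive Gaussian likelihood keeps every gap value in the support of the conditional law given $O_k$. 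That is a slightly cleaner justification of the paper's assertion that the values leave the gap underdetermined.
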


\begin{proof}
The minimum mean-square estimate from a given information set is the conditional mean, with error the
conditional variance. A gap-blind estimator is, by Def.~\ref{def:gapblind}, a function of $O_k$ alone;
among all such functions the error is minimized by $\mathbb E[s_k\mid O_k]$, of error
$\mathbb E[\operatorname{Var}(s_k\mid O_k)]$. This optimum is attained only in the infinite-capacity limit,
so it lower-bounds every concrete gap-blind estimator (the first inequality), and the bound is
capacity-free precisely because it bounds the \emph{best} function of $O_k$, already granting the network
any gap information recoverable from the values themselves. Conditioning on the further variable
$\Delta_{1:k}$ cannot raise mean-square error (tower property / Rao--Blackwell), giving the decomposition
with $\mathcal I_k\ge0$. For strictness, the law of total variance (conditioning on $\Delta_{1:k}$ given
$O_k$) gives
\[
\mathcal I_k=\mathbb E\big[\operatorname{Var}_{\Delta_{1:k}\mid O_k}\!\big(\mathbb E[s_k\mid O_k,\Delta_{1:k}]\big)\big],
\]
so $\mathcal I_k>0$ iff the gap-aware posterior \emph{mean} $\mathbb E[s_k\mid O_k,\Delta_{1:k}]$ depends on
the gaps given the values, on a positive-probability set: it is the mean's gap-dependence, not the
variance's, that the decomposition isolates. This holds in both realizations of Def.~\ref{def:model}, because
the optimal weight on the latest observation is gap-dependent. For the Brownian drift the one-step gain
$K(\Delta_k)=(P+q\Delta_k)/(P+q\Delta_k+\sigma^2)$ is strictly increasing in $\Delta_k$, so the posterior
mean $(1-K)\,\hat s_{k-1}+K\,o_k$ moves with $\Delta_k$ whenever $o_k\neq\hat s_{k-1}$; and the values leave
the gap underdetermined ($\operatorname{Var}(\Delta_k\mid O_k)>0$, increment and noise confounded at
$\sigma^2>0$), so this mean-dependence survives conditioning on $O_k$. For the change-point process a longer gap raises the change
probability $1-e^{-\lambda\Delta_k}$, up-weighting the latest observation against the accumulated past, so
again the mean depends on $\Delta_k$. Hence $\mathcal I_k>0$ whenever the optimal gain depends on the gap.
When $\operatorname{Var}(\Delta)=0$ the gap is a known constant and adds nothing, so $\mathcal I_k=0$.
\end{proof}

\begin{remark}[How large the penalty is, and where it bites]
\label{rem:sep-size}
The one-step Gaussian case, the random-walk realization $G(\Delta)=q\Delta$ of Def.~\ref{def:model}, makes
$\mathcal I_k$ concrete. With prior variance $P$ before the gap, observing
$\Delta$ sets the predicted variance to $P+q\Delta$ and the posterior variance to
$v(\Delta)=\sigma^2(P+q\Delta)/(P+q\Delta+\sigma^2)$~\cite{jazwinski}; the gap-aware MMSE is
$\mathbb E[v(\Delta)]$. A gap-blind estimator must commit to a $\Delta$-independent posterior whose error is,
at best, $v$ evaluated at the value-implied estimate of $\Delta$. Since $v$ is strictly concave in $\Delta$
and $o_k$ identifies $\Delta$ only up to the noise floor $\sigma^2$, the gap is, to leading order in the
gap dispersion, $\mathcal I_k\approx\tfrac12\,|v''(\bar\Delta)|\,\operatorname{Var}(\Delta\mid O_k)>0$:
it grows with the dispersion of the sampling intervals and with the noise $\sigma^2$ (which both sharpens
the concavity and blurs the value-implied gap). The separation is therefore widest in the \emph{noisy,
irregularly-sampled} regime, exactly where asynchronous streams live, and closes only at high
SNR (the values reveal the gap) or under regular sampling (no gap to reveal).
\end{remark}

\begin{remark}[What the separation buys]
\label{rem:implications}
The theorem is a no-go with a constructive escape. \emph{No-go}: on a fixed-weight substrate over an
asynchronous stream, no gap-blind architecture, a vanilla RNN/LSTM/GRU however wide, deep, or long-trained,
reaches the optimum; the deficit is a missing \emph{input} (the elapsed gap), so added capacity cannot
remove it. \emph{Escape}: a single change closes it. Let the gap enter an input-dependent time-constant, the
liquid class of \S\ref{sec:liquid}, which is exactly why $\Delta$-fed cells (CfC/LTC, Time-LSTM, GRU-D)
succeed where their gap-blind ancestors fail. The engineering reading: in the noisy, unschedulable-timing
regime the lever is architecture, not scale; spend the budget on the gap channel.
\end{remark}

\section{The liquid class}
\label{sec:liquid}
The two necessities pick out one class of fixed-weight substrates: those realizing both an adaptive timescale
and an elapsed-gap update. We name it, give its realization, and locate the standard recurrences relative to
it.

\begin{definition}[Liquid substrate]
\label{def:liquid}
The substrate \eqref{eq:cfc} is a \emph{liquid substrate}: a fixed-weight (no online weight update)
continuous-time state recurrence whose update between successive observations has two properties, both of
which the two results above establish a self-evolving, asynchronously-observed substrate \emph{must} have.
\textnormal{(L1)} an \emph{adaptive timescale}: the effective integration time-constant is a function of the
input/state, so the estimator it realizes is time-varying (Prop.~\ref{prop:necessity}).
\textnormal{(L2)} \emph{elapsed-gap dependence}: the update is a function of the gap $\Delta_k$ since the
last observation, entering the dynamics as native structure rather than as a learned input (Thm~\ref{thm:sep}).
The gate $\sigma_k=\mathrm{sig}(a(u_k)\Delta_k+b(u_k))$ of \eqref{eq:cfc} keeps the timescale
input-dependent with frozen weights, and a multi-timescale state supplies (L1) without a hand-tuned rule.
The definition is the property pair, not a parameterization: the canonical realization is the MIT
liquid-network line, liquid time-constant networks~\cite{ltc} and their closed-form variant CfC~\cite{cfc},
but \emph{any} fixed-weight continuous-time cell meeting (L1)+(L2) is liquid in this sense (the exclusions are
itemized below). This paper's liquid substrate is therefore a \emph{liquid network of the MIT family at the
level of the cell, deployed in the reservoir / Liquid-State-Machine stance}
(Remark~\ref{rem:fixed}): the same dynamics in a different role, a fixed substrate, not a trained model.
\end{definition}

\begin{corollary}[The fixed-weight face: continuous-time is the native escape]
\label{cor:native}
Gap-awareness can be supplied two ways: \emph{structurally}, by a continuous-time cell whose update carries
$\Delta_k$ in its dynamics; or \emph{extrinsically}, by feeding $\Delta_k$ to a discrete cell trained to use
it (Time-LSTM, GRU-D). Among \emph{fixed-weight} substrates, with no training to the gap, only the structural
route applies, since the extrinsic one's compliance is learned. Thus on the reservoir / fixed-substrate
stance of Rem.~\ref{rem:fixed}, the elapsed-gap dependence Thm~\ref{thm:sep} requires is native to a
continuous-time (liquid) cell and unavailable to a frozen discrete one. Thm~\ref{thm:sep} forbids
gap-blindness at any capacity; the liquid cell of Def.~\ref{def:liquid} is how a frozen substrate escapes
the forbidden class.
\end{corollary}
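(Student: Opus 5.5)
The corollary combines Thm~\ref{thm:sep} with the fixed-weight premise of Rem.~\ref{rem:fixed}. The plan is to sort every way an estimator can escape the gap-blind class of Def.~\ref{def:gapblind}, and then ask which of those ways survives freezing the weights. The first step is a dichotomy. By Def.~\ref{def:gapblind}, an estimator escapes gap-blindness iff $\hat s_k$ is a non-trivial function of $\Delta_{1:k}$ (or $t_{1:k}$) on a positive-probability set, and Thm~\ref{thm:sep} makes this necessary whenever $\mathcal I_k>0$. For a recurrence $h_k=F(h_{k-1},o_k,\cdot)$, the gap can enter only in two places: inside $F$'s state evolution, or as an extra coordinate of its input. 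I would formalize these as two cases:
\begin{itemize}
\item \emph{structural}: $F$ is the flow map of a continuous-time system evaluated over duration $\Delta_k$, as in \eqref{eq:cfc}, where $\Delta_k$ multiplies the fixed gate $a(u_k)$;
\item \emph{extrinsic}: $F_\theta(h_{k-1},[o_k;\Delta_k])$ is a discrete cell that treats $\Delta_k$ as an input feature.
\end{itemize}

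The second step handles the structural route under frozen weights. In \eqref{eq:cfc} the gate $\sigma_k=\mathrm{sig}(a(u_k)\Delta_k+b(u_k))$ is strictly monotone in $\Delta_k$ whenever $a(u_k)\neq0$. Hence, for any fixed $(\phi,f_1,f_2,a,b)$ with $a\not\equiv0$ and $f_1\neq f_2$ on a positive-probability set, $h(t_k)$ depends on $\Delta_k$ by construction. Gap-dependence therefore holds for every generic frozen weight setting, with no training. That is what ``native'' means, and it places the cell outside the class Thm~\ref{thm:sep} forbids.

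The third step handles the extrinsic route. Feeding $\Delta_k$ does formally break gap-blindness. The issue is whether the frozen weights use it the way Thm~\ref{thm:sep} requires: the gain should increase with $\Delta_k$, as $K(\Delta_k)$ does in the Brownian case. A generic discrete cell has no structural reason to make its effective gain monotone in an input coordinate. That direction exists only if the weights were fitted to it, which is precisely the ``compliance is learned'' clause.

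This third step is the main obstacle, because it is a scoping argument rather than a theorem. A frozen discrete cell whose weights happen to encode the right $\Delta$-dependence is not literally excluded. I would therefore state the corollary's claim precisely as follows. Under the reservoir stance (weights set without training to the gap, so gap-response is not a tuned property), the only architecture whose gap-dependence is guaranteed by form rather than by parameter values is the continuous-time one. I would support this by exhibiting frozen weights for a $\Delta$-fed discrete cell that ignore or invert the gap, and contrasting them with the CfC gate, whose monotone $\Delta$-dependence holds for all fixed weights with $a>0$.

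Closing then only requires combining the steps. By Thm~\ref{thm:sep}, gap-blind substrates are excluded at any capacity. Among the remaining routes, only the structural one delivers the dependence on every fixed substrate, so the liquid cell of Def.~\ref{def:liquid}, via (L2), is the escape.
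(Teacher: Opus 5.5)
Your skeleton is the paper's. The corollary has no separate proof. Its justification is the stipulation you quote: a $\Delta_k$-fed discrete cell's gap-use is trained in, so the qualifier ``no training to the gap'' excludes it. Note that Rem.~\ref{rem:fixed} even allows the frozen weights to come from offline training, so the exclusion rests entirely on that qualifier. The paper adds only the remark that the gate of \eqref{eq:cfc} stays $\Delta_k$-dependent with frozen weights, which your second step makes precise; you need $a(u_k)$ and $f_2(u_k)-f_1(u_k)$ nonzero, on a positive-probability set, in a common coordinate that the read-out $\rho$ actually uses. The problem is the step you add to turn the stipulation into a form-versus-parameters asymmetry. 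It fails, and your closing sentence depends on it.

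Your claim that only the structural route ``delivers the dependence on every fixed substrate'' is false in both senses you use.
\begin{itemize}
\item \emph{Mere dependence.} Leaving the class of Def.~\ref{def:gapblind} takes only some dependence on $\Delta_{1:k}$, and that is all Thm~\ref{thm:sep} requires. A frozen discrete cell fed the gap, e.g.\ $h_k=\tanh(Wh_{k-1}+Uo_k+v\Delta_k+b)$ with $v\neq0$, has it for generic frozen weights, just as the CfC does.
\item \emph{Right-signed dependence.} Here you want the effective gain to increase in $\Delta_k$, like $K(\Delta_k)$. That asks for more than Thm~\ref{thm:sep} gives: it only excludes gap-blind estimators and says nothing about which gap-aware one attains the MMSE. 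Nor does the CfC supply it for all frozen weights. $\sigma_k$ is monotone in $\Delta_k$ for every weight setting, but its direction is set by the sign of $a$. Whether moving toward $f_2$ means trusting $o_k$ more depends on what the frozen $f_1,f_2$ compute.
\end{itemize}
GRU-D's decay $\exp(-\max(0,w\Delta_k+b))$ is likewise monotone in $\Delta_k$ for every $w$, with direction set by the sign of $w$. Your contrast therefore pits a sign-restricted CfC against an unrestricted discrete cell. Under your own premise of weights not fitted to the gap, both have a $\Delta$-response whose direction is whatever the weights happen to give.

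The dichotomy is also not a case split on functional form. \eqref{eq:cfc} is itself a map $F(h_{k-1},x_k,\Delta_k)$ with $\Delta_k$ as an input coordinate, and it is a closed-form approximation rather than an exact flow map. Conversely, GRU-D embeds an exponential-in-elapsed-time decay inside a discrete cell. The line the corollary draws is provenance: gap-response fixed by the cell's design versus fitted by training. The sound close is therefore the paper's. Take the exclusion of the extrinsic route as part of the fixed-weight premise, and conclude that the liquid cell is an escape available without training. Do not claim it is the unique architecture whose gap-dependence is guaranteed by form.
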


\begin{corollary}[The mesh face: every agent must be liquid]
\label{cor:mesh}
Each agent of a mesh (Def.~\ref{def:mesh}) integrates an admitted stream that is exogenously timed,
non-stationary, and folded on a substrate fixed during operation, so each agent satisfies the hypotheses of
Def.~\ref{def:model} by construction. Prop.~\ref{prop:necessity} and Thm~\ref{thm:sep} therefore bind every
agent: a mesh's temporal substrate must be liquid at every node, with no node exempt and no center to carry
the dependence on its behalf. The statement is the per-agent necessity quantified over the population, not a
claim about the collective's dynamics: convergence of center-free inference, identification, and the
learning loop are the companion's domain~\cite{meshinf} and are not addressed here.
\end{corollary}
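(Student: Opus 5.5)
The statement to prove is Corollary~\ref{cor:mesh}. I will prove it by instantiation: check that each agent's integration problem meets the hypotheses of Prop.~\ref{prop:necessity} and Thm~\ref{thm:sep}, then apply both agent by agent.

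\textbf{Step one: match the mesh axioms to the model hypotheses.} Def.~\ref{def:mesh} gives three per-agent properties, and I match each to a hypothesis in turn.
\begin{enumerate}
\item \emph{Exogenous arrivals.} An agent's admitted stream arrives at times set by the emitters. This gives arrival times $t_1<t_2<\cdots$ drawn independently of the latent the agent tracks.
\item \emph{Non-stationarity.} The peers' cognitive states keep evolving. This gives a self-evolving latent with $G$ strictly increasing, and a change rate $\lambda>0$ when the evolution is regime-like.
\item \emph{Fixed substrate.} The absence of a coordinator means the substrate is frozen. This places the agent in the fixed-weight setting of Rem.~\ref{rem:fixed} and Cor.~\ref{cor:native}.
\end{enumerate}
Since emitters clock independently and no agent can impose a schedule, I will also argue that $\operatorname{Var}(\Delta)>0$ generically. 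The one exception is the regular-clock case $\operatorname{Var}(\Delta)=0$, which Thm~\ref{thm:sep} already identifies as the degenerate case with $\mathcal I_k=0$. I treat it as excluded because no agent can force it on its peers.

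\textbf{Step two: apply the two results per agent.} For each agent, Prop.~\ref{prop:necessity} forces (L1): a constant-gain substrate is strictly suboptimal. Thm~\ref{thm:sep} forces gap-dependence: every gap-blind estimator pays $\mathcal I_k>0$. Cor.~\ref{cor:native} then converts gap-dependence on a frozen substrate into the structural, continuous-time route, which is (L2). Together (L1) and (L2) are exactly Def.~\ref{def:liquid}, so the agent's substrate must be liquid if it is to attain the MMSE.

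\textbf{Step three: no node is exempt.} The theorems bound each agent's own estimator given its own admitted stream. A peer could in principle carry the gap dependence on that agent's behalf only by sending something extra: either the gaps themselves, which the agent could not use on a gap-blind frozen cell, or a corrected state, which the emit-projections-only rule forbids and no center exists to provide. So the bound holds at every node, and quantifying over agents gives the corollary. The claim stops there: it is a per-agent necessity, not a statement about the collective dynamics.

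\textbf{Main obstacle.} The hard part is step one's identification of the admitted stream with Def.~\ref{def:model}. The latent is a peer's evolving state, and the observation is a typed projection of it after content gating. Two things need checking:
\begin{itemize}
\item that the projection takes the form $o_k=s(t_k)+\varepsilon_k$ with $\sigma^2>0$;
\item that content gating does not make the admission times depend on the latent, which would break exogeneity.
\end{itemize}
I would handle this by defining $s$ as the projected latent itself, which makes the observation form hold with $s$ so defined. For the second point I would argue that the theorem's bound only needs the gaps not to be recoverable from the values, $\operatorname{Var}(\Delta_k\mid O_k)>0$. That survives mild state dependence, so I would state the corollary under this weaker condition if exogeneity cannot be assured.
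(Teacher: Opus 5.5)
Your steps one to three are the paper's own argument, which is only this: the identification ``the three properties of Def.~\ref{def:mesh} are exactly the hypotheses of Def.~\ref{def:model}'', then Prop.~\ref{prop:necessity} and Thm~\ref{thm:sep} applied node by node, with the remarks that no agent can impose a regular clock and no center can correct a gap-blind node. You are right that the identification is not automatic; the paper simply asserts it. But the repair you propose fails.

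\emph{The noise condition.} Setting $s$ to be the projected latent itself makes the emitted value equal to $s(t_k)$. So $\varepsilon_k\equiv 0$ and $\sigma^2=0$, unless you separately posit emission noise. That is exactly the case Thm~\ref{thm:sep} excludes: $O_k$ then reveals $s_k$, the gap-blind estimate $\hat s_k=o_k$ is exact, and $\mathcal I_k=0$. Positive noise has to come from the projection being a lossy or noisy view of the quantity the agent actually tracks. So $\sigma^2>0$ is an added modeling assumption, not something the mesh axioms deliver.

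\emph{The exogeneity fallback.} This misstates what is needed in two ways.
\begin{enumerate}
\item The lower bound and the decomposition in Thm~\ref{thm:sep} hold for any joint law, with no exogeneity at all. Only strictness is at stake.
\item $\operatorname{Var}(\Delta_k\mid O_k)>0$ is necessary but not sufficient for strictness. A static latent with exogenous gaps has $\operatorname{Var}(\Delta_k\mid O_k)>0$ yet $\mathcal I_k=0$.
\end{enumerate}
By the total-variance identity in the paper's proof, $\mathcal I_k>0$ requires $\mathbb E[s_k\mid O_k,\Delta_{1:k}]$ to vary with the gaps given the values. The paper gets this from the gap-dependent gain under exogenous sampling. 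Once content gating makes admission times depend on the latent, the likelihood of those times also enters the posterior. The gap-dependence of the mean must then be re-derived, not asserted to survive mild state dependence.

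\emph{The change-point qualifier.} Keep in step two the qualifier you introduced in step one. Prop.~\ref{prop:necessity} is proved only for the change-point realization. For a Brownian-drift latent with Gaussian noise, the gap-aware optimum is the irregular-sample Kalman filter: linear in the observations, with gains fixed by the gaps alone. The nonlinearity argument that forces (L1) therefore does not apply there. ``Every node must be liquid'' thus needs a regime-like latent at every node, an assumption the paper's corollary also leaves implicit.
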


\noindent These two results are the two legs, both read off the single model of Def.~\ref{def:model},
each turning on a complementary facet of it, and together they close the gaps a single ``adaptive'' claim
leaves open. The latent's change rate forces an \emph{adaptive} timescale (Prop.~\ref{prop:necessity},
at full generality: a fixed filter is suboptimal, not only the one-pole form); irregular observation of the
same latent forces an \emph{elapsed-gap} update (Thm~\ref{thm:sep}: no gap-blind recurrence, at any
capacity, matches the MMSE). Their intersection is the liquid class (Def.~\ref{def:liquid}), and the
exclusions are clean. A
generic reservoir (echo-state network~\cite{esn}) has timescales set by its fixed leak rate and
weights, \emph{input-independent}, so it fails the adaptive leg. A \emph{vanilla, per-step} LSTM has an
input-dependent forget gate, adaptive, but updates once per observation regardless of elapsed time, so
it fails the elapsed-gap leg. A discrete net \emph{fed} the gap (GRU-D, Time-LSTM) attains both, but only
by being \emph{trained} to: a coerced member of the gap-aware class, outside the fixed-substrate
commitment. Among \emph{fixed-weight} recurrences, only a liquid one carries an adaptive, elapsed-gap
timescale without retraining: its gate \eqref{eq:cfc} stays input-dependent \emph{even with frozen weights},
exactly what committing to a fixed substrate buys rather than costs. Detecting change is the prerequisite of the adaptive leg~\cite{basseville,kalman,haykin},
which the multi-timescale realization below supplies.

\paragraph{The multi-timescale realization.}
The naive way to make the gain input-dependent, raise it when the incoming observation deviates from
$h$, fails, because a single observation's deviation cannot separate noise from change: in noise it
raises the gain and amplifies it (the negative control, \S\ref{sec:exp-liquid}). The liquid substrate's
actual structure resolves this. Its time constants are \emph{bimodal}, fast and slow neuron
populations~\cite{cfc}, so the state carries a fast track $h_f$ and a slow track $h_s$ at once; their
\emph{divergence} $|h_f-h_s|$ is the change signal, since noise does not persistently separate the two
while a real jump does. The effective gain rises with the divergence (trust the fast track at a change,
the slow track in noise), so adaptivity is supplied by the multi-timescale state itself, not by a
hand-tuned rule.

\section{Verification}
\label{sec:exp-liquid}
We verify each necessity in turn: the adaptive timescale of Prop.~\ref{prop:necessity} against baselines that
lack it (Fig.~\ref{fig:liquid}), and the gap-awareness separation of Thm~\ref{thm:sep} as an exact,
training-free computation of the bound itself (Fig.~\ref{fig:separation}).

\subsection{The adaptive timescale: liquid versus fixed}
\label{sec:exp-prop}
Figure~\ref{fig:liquid} tests Prop.~\ref{prop:necessity} directly. A scalar agent integrates an
asynchronous, bursty, noisy stream of an evolving latent (piecewise-constant with occasional jumps);
we compare fixed-fast, fixed-slow, the naive single-deviation liquid rule, and the multi-timescale
liquid integrator (seeded, reproducible; \texttt{liquid\_repro.py}). (a)~Fixed-fast follows jumps but is
noisy, fixed-slow is smooth but lags, the multi-timescale liquid does both. (b)~On the
noise/lag plane the two fixed timescales sit on the tradeoff frontier Prop.~\ref{prop:necessity}
describes; the naive liquid rule sits beside fixed-fast (it mistakes noise for change); and of the
integrators tested only the multi-timescale liquid reaches the low-noise, low-lag corner; no fixed
timescale does, as Prop.~\ref{prop:necessity} requires. (c)~Its joint noise-and-lag advantage shows up as the best in-band
fraction ($99\%$ vs.\ $96$--$98\%$): it stays within a fixed band of the truth most often, though the
margin over fixed-slow ($98\%$) is small, so the in-band claim rests on the joint (a,b) advantage, not panel
(c) alone. Across stream regimes the multi-timescale liquid has the lowest error
wherever the latent \emph{evolves} (calm, volatile, storm); only in the degenerate stationary
limit, pure noise around a constant with nothing to evolve and hence nothing to integrate, does a fixed slow average
match or beat it. That limit is the boundary of the claim, not a counterexample: adaptivity is necessary
exactly where there is something to adapt to.
\begin{figure}[h]\centering
\includegraphics[width=\linewidth]{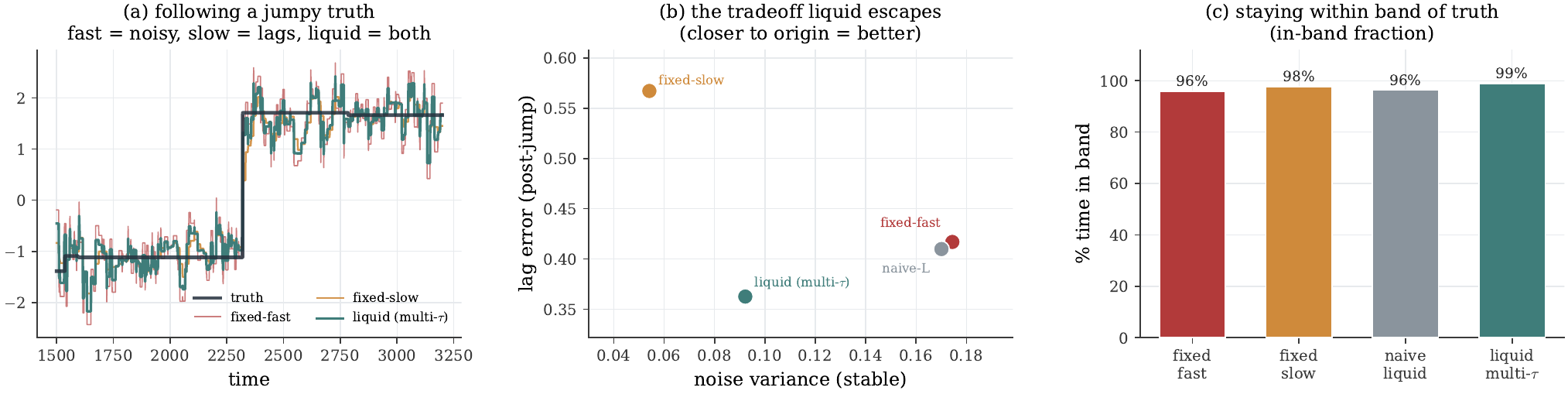}
\caption{An adaptive timescale is necessary (synthetic, reproducible). A scalar agent integrates an evolving
latent from a noisy, bursty, asynchronous stream. (a)~fixed-fast is noisy, fixed-slow lags, the
multi-timescale liquid does both. (b)~the fixed timescales sit on the noise/lag tradeoff frontier and the
naive single-deviation liquid rule fails (it amplifies noise); only the multi-timescale liquid (fast/slow
divergence as the change signal) reaches the low-noise, low-lag corner. (c)~the liquid integrator stays
within a fixed band of the truth the most, though the margin over fixed-slow is small; the in-band claim
rests on the joint (a,b) advantage, not (c) alone.}
\label{fig:liquid}
\end{figure}

\subsection{The gap-awareness separation}
\label{sec:exp-sep}
Figure~\ref{fig:separation} tests Thm~\ref{thm:sep}. Because the bound is informational, not architectural, it
needs no training: in the one-step Gaussian case (the random-walk realization of Def.~\ref{def:model}), the
gap-aware MMSE $\mathbb E[v(\Delta)]$ and the \emph{best} gap-blind error $\mathbb E[\operatorname{Var}(s\mid
O)]$, the floor every gap-blind estimator obeys at any width or depth, are computed exactly by integrating out
the random gap. (a)~The gap-blind error stays strictly above the gap-aware floor, and the shortfall widens
with the dispersion of the arrival times. (b)~The separation $\mathcal I_k$ is zero under regular sampling
($\operatorname{Var}(\Delta){=}0$) and rises with $\operatorname{Var}(\Delta)$, tangent at the origin to the
analytic leading-order law of Remark~\ref{rem:sep-size} (the coefficient $\tfrac12|v''(\bar\Delta)|$, not a fit). (c)~It is widest in the noisy regime and closes at high SNR,
where the values themselves reveal the gap, exactly as Remark~\ref{rem:sep-size} predicts. This is the gap no
gap-blind network closes with scale; \S\ref{sec:open} turns it into the dispersion sweep a deployed mesh would
measure.
\begin{figure}[h]\centering
\includegraphics[width=\linewidth]{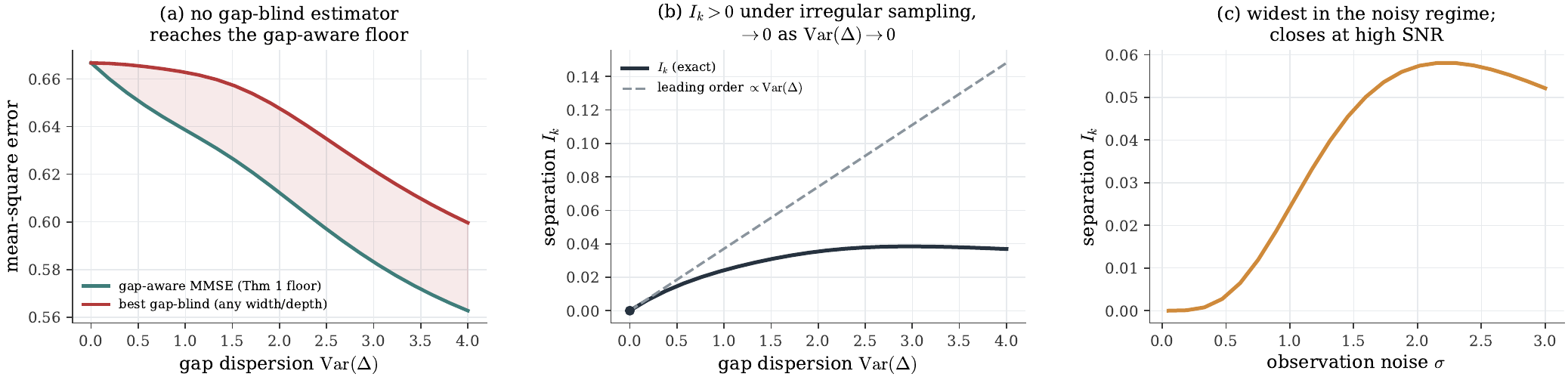}
\caption{The gap-awareness separation is real and grows with irregularity (exact, training-free). One-step
Gaussian model (Def.~\ref{def:model}, random walk); the gap-aware MMSE and the \emph{best} gap-blind error are
computed by integrating out the random gap, so the bound holds at any capacity. (a)~the best gap-blind error
stays above the gap-aware floor, the shortfall widening with $\operatorname{Var}(\Delta)$. (b)~the separation
$\mathcal I_k$ vanishes at $\operatorname{Var}(\Delta){=}0$ and rises with it, matching the analytic
leading-order law (Rem.~\ref{rem:sep-size}) near the origin. (c)~$\mathcal I_k$ peaks in the noisy regime and closes at high
SNR.}
\label{fig:separation}
\end{figure}

\section{Limitations and future directions}
\label{sec:open}
\paragraph{Assumptions.} The results rest on assumptions whose violation changes them. \emph{Exogenous
sampling}: arrivals are state-independent, so the elapsed gap is non-informative about the latent's value and
acts only through the gain (Thm~\ref{thm:sep}); informative, state-modulated sampling would let the gap carry
value too, and the decomposition would not hold. \emph{A self-evolving latent}: Prop.~\ref{prop:necessity} is
proven for a piecewise-stationary (change-point) latent and Thm~\ref{thm:sep} for any strictly-increasing
$G(\Delta)$ (a Brownian drift included); other non-stationarity classes, and the multivariate,
partially-observed case, are open. \emph{A single agent}: the theorems are per-agent necessities, lifted to
every node of a mesh by Cor.~\ref{cor:mesh}; they do not address the collective's multi-agent dynamics, such
as the convergence of center-free inference~\cite{meshinf}.

\paragraph{What is not claimed.} The result is necessity, not sufficiency: that a liquid substrate
\emph{attains} the optimum is examined only empirically, on a synthetic benchmark (\S\ref{sec:exp-prop}), and
is not proven; the separation of Thm~\ref{thm:sep}, by contrast, is computed exactly (\S\ref{sec:exp-sep}).
Real-data validation remains future work. The separation is informative only where it
bites: its shortfall grows with the dispersion of the arrival times (Remark~\ref{rem:sep-size}) and is
negligible under near-regular sampling.

\paragraph{Future directions.} Relaxing the fixed-weight premise is the bridge from inference to learning.
What an adapting substrate should move \emph{toward} is, in a mesh, the collective's intention, the
\emph{ask}, which reaches each agent the way its observations do; whether such ask-driven adaptation is
stable rather than drifting is open. Beyond the substrate, further questions remain open: the
content gate that admits what is relevant~\cite{svaf}, resilient admission under adversaries, and what
\emph{grounds} the collective rather than merely keeping it coherent. Figure~\ref{fig:separation} computes that
separation exactly in the Gaussian model; carrying it to real data is the next step. A deployed edge mesh,
whose agents integrate genuinely irregular peer streams, supplies the test in situ: the
gap-blind-versus-liquid tracking gap should widen with the measured dispersion of inter-arrival times, the
real-data validation the synthetic computation stands in for.

\section{Conclusion}
Two facts about integrating a self-evolving latent online determine what the substrate must be: the latent
changes, and its observations arrive at clock-free times. The first forces a time-varying optimal estimator
(Prop.~\ref{prop:necessity}); the second forces an elapsed-gap gain that no gap-blind recurrence, at any
capacity, can match (Thm~\ref{thm:sep}). Among fixed-weight substrates, the intersection of the two is the
liquid class: an LSTM has the first property only and a fixed continuous-time filter the second only, while a
multi-timescale liquid network supplies both, outperforming the fixed-timescale and naive-adaptive
baselines, with the gap-awareness separation it exploits computed exactly (\S\ref{sec:exp-liquid}). The characterization is scoped, not universal: it binds optimal integration on
a fixed-weight substrate, not a learner free to retrain its weights, and weight-adapting recurrences attain
both properties by other means (Remark~\ref{rem:fixed}). The setting that makes the necessity bite is
\emph{mesh intelligence}: there the quantities an agent tracks keep moving, its arrivals cannot be
scheduled, and its weights are fixed, so the conditions of both results hold by construction, and this
liquid substrate is the temporal layer of its coupling. Whether that substrate can go further and adapt its
own weights online, beyond the fixed-weight regime characterized here, is the direction this work opens.

\end{document}